\documentclass[conference,letterpaper]{IEEEtran}

\usepackage[left = 0.74in, right=0.64in, bottom=0.98in, top =0.74in]{geometry}

\newcommand{\Latexfilepath}{.}
\usepackage{\Latexfilepath/Bermudez}
 \newcommand{\Figurespath}{\Latexfilepath/Figures}

\begin{document}

\title{Decentralized Machine Learning with Centralized Performance Guarantees via Gibbs Algorithms}
\IEEEoverridecommandlockouts

 \author{
   \IEEEauthorblockN{
   Yaiza Bermudez\IEEEauthorrefmark{1}, 
   Samir M. Perlaza\IEEEauthorrefmark{1}\IEEEauthorrefmark{2}\IEEEauthorrefmark{4}, and
   I\~{n}aki Esnaola\IEEEauthorrefmark{3}\IEEEauthorrefmark{4}\\
   Emails: name.lastname@inria.fr and esnaola@sheffield.ac.uk
                     }
   \IEEEauthorblockA{\IEEEauthorrefmark{1}%
                     Centre Inria d'Université Côte d'Azur, INRIA, 
                     Sophia Antipolis, France.}
   \IEEEauthorblockA{\IEEEauthorrefmark{2}%
                     Laboratoire GAATI, Universit\'{e} de la Polyn\'{e}sie fran\c{c}aise, Fa`a`\={a}, French Polynesia.}
   \IEEEauthorblockA{\IEEEauthorrefmark{3}%
                     School of Electrical and Electronic Engineering, University of Sheffield, Sheffield,
United Kingdom.}   
   \IEEEauthorblockA{\IEEEauthorrefmark{4}%
		     ECE Dept. Princeton University, Princeton, 08544 NJ, USA.
\thanks{This work is supported in part by the European Commission through the H2020-MSCA-RISE-2019 project 872172; the French National Agency for Research (ANR)  through the Project ANR-21-CE25-0013 and the project ANR-22-PEFT-0010 of the France 2030 program PEPR Réseaux du Futur; and in part by the Agence de l'innovation de défense (AID) through the project UK-FR 2024352. }                     } 
  }
 \maketitle 

\begin{abstract}
In this paper, it is shown, for the first time, that centralized performance is achievable in decentralized learning without sharing the local datasets. Specifically, when clients adopt an empirical risk minimization with relative-entropy regularization (ERM-RER) learning framework and a forward-backward communication between clients is established, it suffices to share the locally obtained Gibbs measures to achieve the same performance as that of a centralized ERM-RER with access to all the datasets.
The core idea is that the Gibbs measure produced by client~$k$ is used, as reference measure, by client~$k+1$. This effectively establishes a principled way to encode prior information through a reference measure.
In particular, achieving centralized performance in the decentralized setting requires a specific scaling of the regularization factors with the local sample sizes.
Overall, this result opens the door to novel decentralized learning paradigms that shift the collaboration strategy from sharing data to sharing the local inductive bias via the reference measures over the set of models.

\end{abstract}

\section{Introduction}
\enlargethispage{-0.10in}
Decentralized learning studies how a collection of clients can collaboratively tune a learning algorithm by communicating only over a network, without explicitly exchanging raw datasets. This setting extends early work on distributed and asynchronous optimization, where coordination is achieved through local computations and intermittent message passing \cite{tsitsiklis1986distributed,bertsekas1989parallel}. It becomes particularly relevant when a central coordinator is unavailable or undesirable, or when data transfers are impractical due to bandwidth, latency, ownership, privacy, or regulatory constraints \cite{kairouz2021advances}.
A standard benchmark for collaborative learning is the centralized regime in which all local datasets are pooled and a single training procedure is run on the aggregated data. While conceptually simple, the pooled-data benchmark is often unachievable in decentralized environments due to communication constraints and/or restricted disclosure of local datasets~\cite{kairouz2021advances,dwork2006dp}.
This benchmark is revisited through the lens of \emph{Gibbs algorithms}, i.e., data-dependent Gibbs probability measures on the model space. Such Gibbs measures naturally arise as solutions to empirical risk minimizations with relative-entropy regularization (ERM-RER) \cite{csiszar1975idivergence,perlaza2024empirical,daunas2024equivalence,daunas2025asymmetry}. This viewpoint also connects to exponential-weights predictors and PAC-Bayesian posteriors, which reason directly in terms of distributions on hypotheses \cite{cesabianchi2006prediction,mcallester1999some,seeger2002pacbayes,langford2002pacbayes,catoni2007pac,alquier2024pacbayes}. Beyond their variational interpretation, Gibbs measures also capture the long-run distribution of stochastic gradient methods under suitable regimes \cite{welling2011sgld,mandt2017sgd,raginsky2017nonconvex,Azizian2024What}.
From this standpoint, a complementary line of work studies Gibbs measures as solutions to ERM-RER problems and their extensions \cite{perlaza2024empirical,daunas2024equivalence,daunas2025asymmetry, bermudez2026machine}.
Other studies focus on change-of-measure techniques to quantify the variation of an expectation when the underlying probability measure changes \cite{csiszar1975idivergence,PerlazaEntropy2025}. These developments provide tools to interpret and manipulate Gibbs measures as first-class objects in learning systems, and to reason about how information is transported through probability measures rather than through datasets.

This paper shows that centralized performance guarantees can be achieved in a decentralized system through a strategic design of (i) the \emph{reference measures} and (ii) the \emph{regularization factors} that define the clients' Gibbs algorithm. More precisely, a peer-to-peer communication protocol is introduced, in which each client transmits its Gibbs probability measure to its successor, which adopts it as reference measure. This mechanism embeds information from datasets into the learning process without explicitly transmitting such datasets. A closed-form expression is obtained for the resulting decentralized Gibbs probability measures, together with conditions under which it coincides with the Gibbs measure induced by the centralized pooled-data benchmark.

The paper is organized as follows. Section~\ref{SupervisedMachineLearning} introduces the notation and formalizes the decentralized learning setting. Section~\ref{GibbsAlg} defines Gibbs conditional probability measures and their interpretation within the context of ERM-RER. Section~\ref{MainResult} presents the communication protocol and the main results establishing centralized-performance guarantees. Section~\ref{SecProofMainResult} sketches the proof of the main result. Section~\ref{Conclusion} concludes and discusses practical challenges, including the impact of distortions when probability measures are communicated under finite-rate constraints.
\enlargethispage{-0.10in}

\section{Supervised Machine Learning}\label{SupervisedMachineLearning}

Consider a decentralized learning system in which $K$ clients collaboratively tune their local learning algorithms by communicating with each other. For all $k \in \left\lbrace 1, 2, \ldots, K \right\rbrace$, let~$\set{M}_k$,~$\set{X}_k$ and~$\set{Y}_k$, with~$\set{M}_k \subseteq \reals^{d_k}$ and~$d_k \in \ints$, be sets of \emph{models}, \emph{patterns}, and \emph{labels}, respectively, at client~$k$.  
The training data available for client~$k$ consists of $n_k$ data points $(x_{k,1}, y_{k,1})$, $(x_{k,2}, y_{k,2})$, $\ldots$, $(x_{k,n_k}, y_{k,n_k})$, which are elements of the set $\set{Z}_k \triangleq \set{X}_k \times \set{Y}_k$. Such data points form the local training dataset, denoted by $\vect{z}_k \in \set{Z}_k^{n_k}$, which can be explicitly written as
\vspace{-1ex}
\begin{IEEEeqnarray}{rCl}
\label{EqDatasetK}
    \vect{z}_k & \triangleq & \autoparent{(x_{k,1}, y_{k,1}), (x_{k,2}, y_{k,2}), \ldots, (x_{k,n_k}, y_{k,n_k})}.
\end{IEEEeqnarray}
The dataset obtained by the aggregation of all local datasets, denoted by $\vect{z}_{0}$, satisfies
\begin{IEEEeqnarray}{rCl}
\label{EqDatasetZero}
\vect{z}_0 & \triangleq & \autoparent{\vect{z}_1, \vect{z}_2, \ldots, \vect{z}_K } \in \set{Z}_{1}^{n_1}\times\set{Z}_{2}^{n_2}\times\ldots\times\set{Z}_{k}^{n_K}\\
\label{EqDatasetZeroExplicit}
& = & \autoparent{(x_{0,1}, y_{0,1}), (x_{0,2}, y_{0,2}), \ldots, (x_{0,n_0}, y_{0,n_0})}.
\end{IEEEeqnarray}
Hence, the total number of data points, denoted by $n_0 \in \ints$, satisfies
$n_0  \triangleq \sum_{k =1}^{K} n_k$.
%
Given a model $\vect{\theta} \in \set{M}_k$ for client~$k$, the loss induced by such a model with respect to a data point~$\autoparent{x,y} \in \set{Z}_k$ is $\ell_k(x, y, \vect{\theta})$, where the function 
\begin{IEEEeqnarray}{rCl}
\label{EqLossFunction}
\ell_k: \set{Z}_k \times \set{M}_k\to [0, +\infty),
\end{IEEEeqnarray}
is referred to as the \emph{loss function} of client~$k$.
Such a loss function is assumed to be Borel measurable.
The \emph{empirical risk} induced by such a model~$\vect{\theta} \in \set{M}_k$, with respect to the dataset $\vect{z}_k$ in~\eqref{EqDatasetK}, is determined by the function
\begin{IEEEeqnarray}{rCl}
\label{EqEmpiricalRisk}
\mathsf{L}_k:  \function{\set{Z}_k^{n_k} \times \set{M}_k}{[0, +\infty)}{\autoparent{\vect{z}_k, \vect{\theta}}}{\frac{1}{n_k}\sum_{i=1}^{n_k}  \ell_k\left(x_{k,i}, y_{k,i}, \vect{\theta} \right),}
\end{IEEEeqnarray}
where the function $\ell_k$ is defined in \eqref{EqLossFunction}.

The set of all probability measures on the measurable space $\left(\set{M}_k,\mathscr{F}_{\set{M}_k}\right)$ is denoted by $\triangle\!\left(\set{M}_k,\mathscr{F}_{\set{M}_k}\right)$, or simply $\triangle\!\left(\set{M}_k\right)$. The set of all probability measures on $\set{M}_k$ conditioned on an element of ${\set{Z}_k^{n_k}}$ is denoted by $\simplex{\set{M}_k \mid \set{Z}_k^{n_k}}$. Moreover, the set of probability measures in $\simplex{\set{M}_k}$ that are absolutely continuous with respect to $Q_k$ is denoted by $\simplexabs{Q_k}{\set{M}_k}$. 
Using this notation,  a supervised machine learning algorithm is represented by a conditional probability measure, as defined hereunder. 
\begin{definition}[Algorithm]
For all $k \inCountK{K}$, a conditional probability measure $P_{\vect{\Theta}_k \mid \vect{Z}_k} \in \simplex{\set{M}_k \mid {\set{Z}_k^{n_k}}} $ is said to represent a supervised machine learning algorithm.
\end{definition}

Let $P_{\vect{\Theta}_k \mid \vect{Z}_k} \in \simplex{\set{M}_k \mid {\set{Z}_k^{n_k}}}$ be an algorithm. Hence, the
instance of such an algorithm trained upon the dataset $\vect{z}_k$ in \eqref{EqDatasetK} is denoted by $P_{\vect{\Theta}_k \mid \vect{Z}_k = \vect{z}_k}$, which is simply a probability measure in $\simplex{\set{M}_k}$.
\enlargethispage{-0.10in}
\section{Gibbs Algorithms}
\label{GibbsAlg}
The learning framework of client~$k$, with $k \inCountK{K}$, is defined by an  ERM-RER problem. To formalize this optimization problem, consider a dataset $\vect{z}_k \in \set{Z}_k^{n_k}$ and the functional $\mathsf{R}_{k,\vect{z}_k}$ defined as follows, 
\begin{IEEEeqnarray}{rCl}
\label{EqExpEmpRiskModels}
\mathsf{R}_{k,\vect{z}_k} : \function{ \simplex{\set{M}_k}}{[0, +\infty)}{P}{\int \mathsf{L}_k\autoparent{\vect{z}_k,\vect{\theta}} \mathrm{d}P\autoparent{\vect{\theta}},}
\end{IEEEeqnarray}
where the function $\mathsf{L}_k$ is defined in~\eqref{EqEmpiricalRisk}.
The corresponding ERM-RER problem is
\begin{IEEEeqnarray}{rCl}
\label{EqJul25at9h55in2025NiceA}
\min_{P \in \triangle_{Q_{k}} \autoparent{\set{M}_k}} & & \mathsf{R}_{k,\vect{z}_k} \left( P \right) + \lambda_k \KL{P}{Q_{k}},
\end{IEEEeqnarray}
where $Q_{k} \in \simplex{\set{M}_k}$ is a~$\sigma$-finite measure; $\lambda_k \in (0,+\infty)$ is the regularization factor; and $\KL{\cdot}{\cdot}$ represents the relative entropy,\cite[Definition~3]{PerlazaEntropy2025}. 
As shown later in Lemma~\ref{LemmaSolution}, the solution to~\eqref{EqJul25at9h55in2025NiceA}, whenever it exists, admits a closed-form expression.
This expression is a probability measure
parametrized by the empirical risk function $\mathsf{L}_k$; the~$\sigma$-finite measure~$Q_{k} \in \simplex{\set{M}_k}$; and the dataset~$\vect{z}_k \in \set{Z}_k^{n_k}$. Such  measures are referred to as Gibbs probability measures.
%
In order to define them, consider the following function: 
\begin{IEEEeqnarray}{cCl}
\label{EqJuly28at06h58in2025BusToSophia}
\mathsf{K}_{k, Q_{k}, \vect{z}_k} :\longfunction{ \reals}{\reals}{t}{\log \int \exp\left( t \, \mathsf{L}_k\autoparent{\vect{z}_k , \vect{\theta}_k}\right)\mathrm{d}Q_{k}\left( \vect{\theta}_k \right) ,} \middlesqueezeequ\spnum
\end{IEEEeqnarray}
where the function $\mathsf{L}_k$ is defined in~\eqref{EqEmpiricalRisk}.
Under the assumption that the reference measure~$Q_{k}$ is a probability measure, the function~$\mathsf{K}_{k, Q_{k}, \vect{z}_k}$ in~\eqref{EqJuly28at06h58in2025BusToSophia} is the cumulant generating function of the random variable~$\mathsf{L}_k\autoparent{\vect{z}_k , \vect{\theta}_k}$, for some fixed dataset~$\vect{z}_k \in \set{Z}_k^{n_k}$, when the model~$\vect{\theta}_k$ is sampled from $Q_{k}$.
Using this notation, the definition of the Gibbs conditional probability measure is presented hereunder.
\begin{definition} 
\label{DefGibbsModels}
Given the function~$\mathsf{L}_k$ in \eqref{EqEmpiricalRisk}; 
a~$\sigma$-finite measure~$Q_{k}$; and 
a~$\lambda_k \in (0, +\infty)$, with $k \inCountK{K}$,
the probability measure~$P^{(Q_{k}, \lambda_k)}_{\vect{\Theta}_k | \vect{Z}_k} \in \triangle\left(\set{M}_k | \set{Z}_k^{n_k} \right)$ is said to be an~$(\mathsf{L}_k, Q_{k}, \lambda_k)$-Gibbs conditional probability measure if
 \begin{IEEEeqnarray}{rCl}
 \mbox{$\forall \vect{z}_k \in \set{Z}^{n_k}$, } \mathsf{K}_{k, Q_{k}, \vect{z}_k} \left(\frac{-1}{\lambda_k} \right) < + \infty; 
 \end{IEEEeqnarray}
 and for all~$\left(\vect{z}_k, \vect{\theta}_k \right) \in \set{Z}^{n_k} \times \supp Q_{k}$,
\begin{IEEEeqnarray}{rcl}
\label{EqGibbsRNDFixedDataset}
\frac{\mathrm{d}P_{\vect{\Theta}_k|\vect{Z}_k = \vect{z}_k}^{(Q_{k}, \lambda_k)}}{\mathrm{d}Q_{k} } \autoparent{\vect{\theta}_k} & = & \exp\autoparent{\frac{-1}{\lambda_k} \EmpRisk{k}{\vect{z}_k}{\vect{\theta}_k} - \mathsf{K}_{k, Q_{k}, \vect{z}_k}\autoparent{\frac{-1}{\lambda_k}}}, \spnum
\end{IEEEeqnarray}
where the function~$ \mathsf{K}_{k, Q_{k}, \vect{z}_k}$ is defined in~\eqref{EqJuly28at06h58in2025BusToSophia}.%
\end{definition}
Note that, while $P_{\vect{\Theta}_k|\vect{Z}_k}^{(Q_{k}, \lambda_k)}$ in \eqref{EqGibbsRNDFixedDataset} is referred to as a Gibbs conditional probability measure, the measure $P_{\vect{\Theta}_k|\vect{Z}_k = \vect{z}_k}^{(Q_{k}, \lambda_k)}$, obtained by conditioning upon a given dataset~$\vect{z}_k\in\set{Z}_k^{n_k}$, is referred to as a Gibbs probability measure.
The following lemma formalizes the connection stated above between Gibbs measures and the ERM-RER problem in~\eqref{EqJul25at9h55in2025NiceA}.
\begin{lemma}\label{LemmaSolution}
Assume that the optimization problem in \eqref{EqJul25at9h55in2025NiceA}  admits a solution. Then, the $(\mathsf{L}_k, Q_{k}, \lambda_k)$-Gibbs probability measure $P_{\vect{\Theta}_k|\vect{Z}_k = \vect{z}_k}^{(Q_{k}, \lambda_k)}$ in~\eqref{EqGibbsRNDFixedDataset} is the unique solution. 
\end{lemma}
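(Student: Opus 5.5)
The plan is to rewrite the objective in \eqref{EqJul25at9h55in2025NiceA} as a single relative entropy with respect to the Gibbs measure, plus a constant that does not depend on $P$. Uniqueness then follows from the fact that relative entropy vanishes only when its two arguments coincide.

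First, I will argue that the assumed existence of a solution forces the objective to be finite at some $P \in \triangle_{Q_k}(\set{M}_k)$. This in turn forces $\mathsf{K}_{k,Q_k,\vect{z}_k}(-1/\lambda_k) < +\infty$; indeed, since $\mathsf{L}_k \geq 0$, one even has $\mathsf{K}_{k,Q_k,\vect{z}_k}(-1/\lambda_k) \leq \log Q_k(\set{M}_k)$. Consequently the Gibbs measure $P^{(Q_k,\lambda_k)}_{\vect{\Theta}_k|\vect{Z}_k=\vect{z}_k}$ in \eqref{EqGibbsRNDFixedDataset} is a well-defined probability measure. Because its density with respect to $Q_k$ is strictly positive on $\supp Q_k$, $Q_k$ and the Gibbs measure are mutually absolutely continuous. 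Hence every $P \ll Q_k$ also satisfies $P \ll P^{(Q_k,\lambda_k)}_{\vect{\Theta}_k|\vect{Z}_k=\vect{z}_k}$.

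Next, for any such $P$ with finite objective, I will use the chain rule for Radon--Nikodym derivatives:
\begin{equation*}
\frac{\mathrm{d}P}{\mathrm{d}Q_k}(\vect{\theta}) = \frac{\mathrm{d}P}{\mathrm{d}P^{(Q_k,\lambda_k)}_{\vect{\Theta}_k|\vect{Z}_k=\vect{z}_k}}(\vect{\theta})\,\frac{\mathrm{d}P^{(Q_k,\lambda_k)}_{\vect{\Theta}_k|\vect{Z}_k=\vect{z}_k}}{\mathrm{d}Q_k}(\vect{\theta}).
\end{equation*}
Taking logarithms, inserting \eqref{EqGibbsRNDFixedDataset}, and integrating with respect to $P$ gives
\begin{equation*}
\lambda_k \KL{P}{Q_k} = \lambda_k \KL{P}{P^{(Q_k,\lambda_k)}_{\vect{\Theta}_k|\vect{Z}_k=\vect{z}_k}} - \mathsf{R}_{k,\vect{z}_k}(P) - \lambda_k \mathsf{K}_{k,Q_k,\vect{z}_k}\!\left(\tfrac{-1}{\lambda_k}\right).
\end{equation*}
Rearranging, the objective equals
\begin{equation*}
\lambda_k \KL{P}{P^{(Q_k,\lambda_k)}_{\vect{\Theta}_k|\vect{Z}_k=\vect{z}_k}} - \lambda_k \mathsf{K}_{k,Q_k,\vect{z}_k}\!\left(\tfrac{-1}{\lambda_k}\right).
\end{equation*}
The second term does not depend on $P$. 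By Gibbs' inequality the first term is nonnegative, and it is zero if and only if $P = P^{(Q_k,\lambda_k)}_{\vect{\Theta}_k|\vect{Z}_k=\vect{z}_k}$. This gives both optimality and uniqueness, with optimal value $-\lambda_k \mathsf{K}_{k,Q_k,\vect{z}_k}(-1/\lambda_k)$.

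The main obstacle is the integrability bookkeeping needed to split the integral of the log-density into separate terms without meeting $\infty - \infty$. This matters because $Q_k$ is only $\sigma$-finite, so $\KL{P}{Q_k}$ can be negative or ill-defined in general. I would handle it by restricting attention to those $P$ for which $\mathsf{R}_{k,\vect{z}_k}(P)$ and $\KL{P}{Q_k}$ are both finite. Such $P$ exist, since the problem has a solution, and any other $P$ cannot be a minimizer. For these $P$, $\mathsf{L}_k$ is $P$-integrable and nonnegative, and the normalizing constant is finite, so the log of the Gibbs density is $P$-integrable. The decomposition is then legitimate. Finally, I will check that the Gibbs measure itself lies in this admissible class, i.e.\ that its own risk and relative entropy are finite, so that it is genuinely feasible and attains the bound.
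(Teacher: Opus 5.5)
The paper gives no argument of its own for this lemma; it simply invokes Lemma~1 of \cite{PerlazaEntropy2025}. Your self-contained proof is a sound alternative. Write $G$ for $P^{(Q_k,\lambda_k)}_{\vect{\Theta}_k|\vect{Z}_k=\vect{z}_k}$ and $\mathsf{K}$ for $\mathsf{K}_{k,Q_k,\vect{z}_k}(-1/\lambda_k)$. Your identity
\[
\mathsf{R}_{k,\vect{z}_k}(P)+\lambda_k\KL{P}{Q_k}=\lambda_k\KL{P}{G}-\lambda_k\mathsf{K}
\]
follows from mutual absolute continuity of $G$ and $Q_k$ and the chain rule. It gives optimality, uniqueness (via Gibbs' inequality) and the optimal value $-\lambda_k\mathsf{K}$ at once. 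It also proves more than the lemma asks. Suppose $Q_k$ is a finite measure, e.g.\ a probability measure, as is every $Q_k$ with $k\geqslant 2$ in Theorem~\ref{TheoremDec21at18h00in2025inMadrid}. Then $t e^{-t/\lambda_k}\leqslant\lambda_k/e$ gives $\mathsf{R}_{k,\vect{z}_k}(G)<\infty$. So $G$ is always feasible with finite objective, and the existence hypothesis is superfluous.

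Two of your bookkeeping steps need repair when $Q_k$ is $\sigma$-finite with $Q_k(\set{M}_k)=+\infty$, which the paper's wording allows.

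First, your bound $\mathsf{K}\leqslant\log Q_k(\set{M}_k)$ is then vacuous. Moreover, a finite objective at some $P$ does not force $\mathsf{K}<\infty$: take Lebesgue measure on $\reals$, $\mathsf{L}_k\equiv 0$, and $P$ uniform on $[0,1]$. The fix is simply to take $\mathsf{K}<\infty$ from Definition~\ref{DefGibbsModels}, which the statement presupposes.

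Second, your closing check that $G$ has finite risk cannot be done unconditionally. Take $Q_k$ to be Lebesgue measure on $[e^2,\infty)$, $\lambda_k=1$ and $\mathsf{L}_k(\vect{z}_k,\theta)=\log\theta+2\log\log\theta$. Then $\mathsf{K}=\log\frac{1}{2}$, yet $\mathsf{R}_{k,\vect{z}_k}(G)=+\infty$. Here you must actually use the existence hypothesis, as follows.
\begin{itemize}
\item Let $G_n$ be the normalized restriction of $G$ to $\{\mathsf{L}_k\leqslant n\}$. Each $G_n$ is admissible, with objective $-\lambda_k\mathsf{K}-\lambda_k\log G(\{\mathsf{L}_k\leqslant n\})\to-\lambda_k\mathsf{K}$.
\item By your identity, every admissible $P\neq G$ has objective strictly above $-\lambda_k\mathsf{K}$.
\item Hence, if $G$ were not admissible, no minimizer could exist, contradicting the hypothesis.
\end{itemize}
With these two adjustments the argument is complete.
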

\begin{IEEEproof}
The proof follows from \cite[Lemma~1]{PerlazaEntropy2025}.
\end{IEEEproof}
\enlargethispage{-0.10in}
This result has also been reported for other $f$-divergences in~\cite{daunas2024equivalence, daunas2025asymmetry}.
Interestingly, the probability measure $P_{\vect{\Theta}_k|\vect{Z}_k= \vect{z}_k}^{(Q_{k}, \lambda_k)}$ in~\eqref{EqGibbsRNDFixedDataset} is the long-run distribution of a stochastic gradient descent algorithm~\cite{Azizian2024What}. In statistical learning, such a distribution is often referred to as the \emph{Gibbs algorithm} \cite{perlaza2023validation}. 

Another optimization problem that is closely related to the probability measure $P_{\vect{\Theta}_k|\vect{Z}_k = \vect{z}_k}^{(Q_{k}, \lambda_k)}$ in~\eqref{EqGibbsRNDFixedDataset} is the following:
\begin{subequations}\label{EqJanuary1at13h54in2026atNice}
\begin{IEEEeqnarray}{rCl}
\min_{P \in \triangle_{Q_{k}} \autoparent{\set{M}_k}} & & \mathsf{R}_{k,\vect{z}_k} \left( P \right) \\
\text{s.t.} & \quad &  \KL{P}{Q_{k}} \leqslant \gamma_k,
\end{IEEEeqnarray} 
\end{subequations}
for some $\gamma_k>0$.
The following lemma establishes the connection. 
\begin{lemma}\label{LemmaJanuary1at14h24in2026}
Assume that the optimization problem in \eqref{EqJanuary1at13h54in2026atNice} admits a solution and that $\lambda_k$ is such that 
\begin{IEEEeqnarray}{rCl}
 \KL{P_{\vect{\Theta}_k|\vect{Z}_k = \vect{z}_k}^{(Q_{k}, \lambda_k)}}{Q_{k}} & = & \gamma_k.
\end{IEEEeqnarray}
Then, the $(\mathsf{L}_k, Q_{k}, \lambda_k)$-Gibbs probability measure $P_{\vect{\Theta}_k|\vect{Z}_k = \vect{z}_k}^{(Q_{k}, \lambda_k)}$ in~\eqref{EqGibbsRNDFixedDataset} is the unique solution to~\eqref{EqJanuary1at13h54in2026atNice}.
\end{lemma}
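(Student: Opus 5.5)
The plan is a Lagrangian argument: show that any feasible competitor cannot beat the Gibbs measure, using Lemma~\ref{LemmaSolution} as the key ingredient. Write $P^\star \triangleq P_{\vect{\Theta}_k|\vect{Z}_k = \vect{z}_k}^{(Q_{k}, \lambda_k)}$ for short. By hypothesis, $\KL{P^\star}{Q_k} = \gamma_k$, so $P^\star$ is feasible for \eqref{EqJanuary1at13h54in2026atNice}. Since \eqref{EqJanuary1at13h54in2026atNice} admits a solution, the infimum is attained and finite. The remaining task is to prove that $P^\star$ attains it, and that no other feasible measure does.

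\textbf{Optimality.} Take any $P \in \simplexabs{Q_k}{\set{M}_k}$ with $\KL{P}{Q_k} \leqslant \gamma_k$. I would apply the optimality of $P^\star$ for the penalized problem \eqref{EqJul25at9h55in2025NiceA}, as given by Lemma~\ref{LemmaSolution}. This yields
\begin{IEEEeqnarray}{rCl}
\mathsf{R}_{k,\vect{z}_k}(P^\star) + \lambda_k \gamma_k & \leqslant & \mathsf{R}_{k,\vect{z}_k}(P) + \lambda_k \KL{P}{Q_k} \nonumber\\
& \leqslant & \mathsf{R}_{k,\vect{z}_k}(P) + \lambda_k \gamma_k, \nonumber
\end{IEEEeqnarray}
and hence $\mathsf{R}_{k,\vect{z}_k}(P^\star) \leqslant \mathsf{R}_{k,\vect{z}_k}(P)$.

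One technicality has to be handled first. Lemma~\ref{LemmaSolution} requires that \eqref{EqJul25at9h55in2025NiceA} admit a solution, which is not one of the stated hypotheses. I would establish this directly using the Gibbs variational identity: for every $P \ll Q_k$ with finite objective,
\begin{IEEEeqnarray}{rCl}
\mathsf{R}_{k,\vect{z}_k}(P) + \lambda_k \KL{P}{Q_k} & = & \lambda_k \KL{P}{P^\star} - \lambda_k \mathsf{K}_{k,Q_k,\vect{z}_k}\!\left(\frac{-1}{\lambda_k}\right). \nonumber
\end{IEEEeqnarray}
This identity follows by writing $\log \frac{\mathrm{d}P}{\mathrm{d}Q_k} = \log\frac{\mathrm{d}P}{\mathrm{d}P^\star} + \log\frac{\mathrm{d}P^\star}{\mathrm{d}Q_k}$ and using \eqref{EqGibbsRNDFixedDataset}. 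The finiteness of $\mathsf{K}_{k,Q_k,\vect{z}_k}(-1/\lambda_k)$ required for this is part of Definition~\ref{DefGibbsModels}. Since $\KL{P}{P^\star} \geqslant 0$, with equality iff $P = P^\star$, the identity gives both existence and uniqueness for \eqref{EqJul25at9h55in2025NiceA} directly.

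\textbf{Uniqueness.} Suppose a feasible $P$ satisfies $\mathsf{R}_{k,\vect{z}_k}(P) = \mathsf{R}_{k,\vect{z}_k}(P^\star)$. Then the chain of inequalities above is tight, so
\begin{IEEEeqnarray}{rCl}
\mathsf{R}_{k,\vect{z}_k}(P) + \lambda_k\KL{P}{Q_k} & \leqslant & \mathsf{R}_{k,\vect{z}_k}(P^\star) + \lambda_k\gamma_k. \nonumber
\end{IEEEeqnarray}
Thus $P$ also minimizes \eqref{EqJul25at9h55in2025NiceA}. Equivalently, by the identity, $\KL{P}{P^\star}=0$, and therefore $P=P^\star$.

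\textbf{Main obstacle.} The delicate step is the rigorous justification of the variational identity: splitting the integrals requires finiteness of $\mathsf{R}_{k,\vect{z}_k}(P)$ and of $\KL{P}{Q_k}$. Both hold for feasible $P$. The risk is nonnegative but could a priori be $+\infty$; in that case $P$ trivially cannot beat $P^\star$. I would also check that $\mathsf{R}_{k,\vect{z}_k}(P^\star) < \infty$, which follows from $\KL{P^\star}{Q_k}=\gamma_k<\infty$ together with \eqref{EqGibbsRNDFixedDataset}.
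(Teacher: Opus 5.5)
Your proof is correct, and it differs from the paper's. The paper gives no argument of its own for this lemma; it simply invokes \cite[Lemma~4]{PerlazaEntropy2025}. You give a self-contained argument in two parts.
\begin{itemize}
\item You reduce the constrained problem \eqref{EqJanuary1at13h54in2026atNice} to the penalized problem \eqref{EqJul25at9h55in2025NiceA} via the standard active-constraint comparison.
\item You derive both the penalized optimality of $P^\star$ and its uniqueness from the identity $\mathsf{R}_{k,\vect{z}_k}(P) + \lambda_k \KL{P}{Q_k} = \lambda_k \KL{P}{P^\star} - \lambda_k \mathsf{K}_{k,Q_k,\vect{z}_k}(-1/\lambda_k)$.
\end{itemize}
Your route has two advantages. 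First, you do not depend on the existence hypothesis of Lemma~\ref{LemmaSolution}, which, as you correctly note, is not among the hypotheses here. Second, your argument shows that the assumed existence of a solution to \eqref{EqJanuary1at13h54in2026atNice} is superfluous, because $P^\star$ is exhibited directly as a minimizer. Your sentence ``the infimum is attained and finite'' can therefore be dropped. The citation route is shorter, but it leaves all of this to the external reference.

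Three small points to tighten.
\begin{itemize}
\item The decomposition $\log\frac{\mathrm{d}P}{\mathrm{d}Q_k} = \log\frac{\mathrm{d}P}{\mathrm{d}P^\star} + \log\frac{\mathrm{d}P^\star}{\mathrm{d}Q_k}$ requires $P \ll P^\star$. This holds because the density in \eqref{EqGibbsRNDFixedDataset} is strictly positive (the loss is finite-valued), so $P^\star$ and $Q_k$ are mutually absolutely continuous. Say this explicitly.
\item Finiteness of $\mathsf{R}_{k,\vect{z}_k}(P)$ does not ``hold for feasible $P$'' as you state. It holds only after you discard the case $\mathsf{R}_{k,\vect{z}_k}(P)=+\infty$, as you then do. Finite risk alone suffices to split the integral, since $\log\frac{\mathrm{d}P}{\mathrm{d}P^\star}$ always has a $P$-integrable negative part. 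You therefore need no separate finiteness claim for $\KL{P}{Q_k}$.
\item That last observation matters if $Q_k$ is merely $\sigma$-finite, since $\KL{P}{Q_k}$ can then be negative. Your argument survives this case, because it only uses $\KL{P}{P^\star}\geqslant 0$.
\end{itemize}
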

\begin{IEEEproof}
The proof follows from \cite[Lemma~4]{PerlazaEntropy2025}.
\end{IEEEproof}
Lemma~\ref{LemmaJanuary1at14h24in2026} implies that the probability measure $P_{\vect{\Theta}_k|\vect{Z}_k = \vect{z}_k}^{(Q_{k}, \lambda_k)}$ in~\eqref{EqGibbsRNDFixedDataset} minimizes the training empirical risk over all probability measures in the following neighborhood of~$Q_k$,
\begin{IEEEeqnarray}{rCl}
\label{EqJanuary1at19h39in2026HomeNice}
\left\lbrace P \in \simplexabs{Q_{k}}{\set{M}_k} :  \KL{P}{Q_{k}} \leqslant \gamma_k \right\rbrace.
\end{IEEEeqnarray}
This observation is important for presenting the main results.

\section{Main result}\label{MainResult}
The main result of this work (Theorem~\ref{TheoremDec21at18h00in2025inMadrid}) is presented in Subsection~\ref{SubsecMain}. In order to present such a result, the peer-to-peer communication protocol used by the clients is introduced. The section ends by stating the necessary conditions under which centralized performance is obtained.

\subsection{Communication Protocol}
\label{SubsecComm}
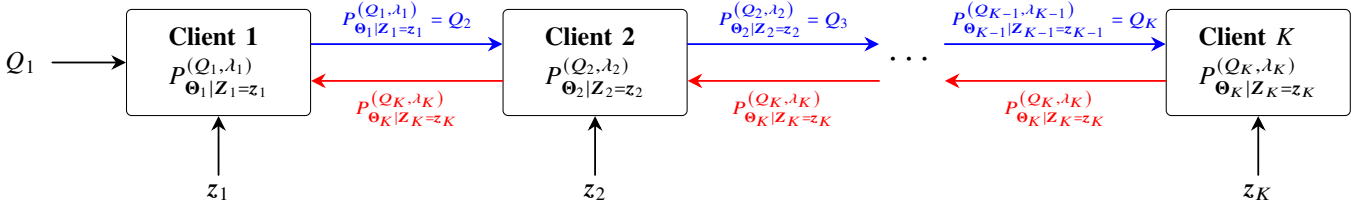
\begin{figure*}[t]
\centering
\resizebox{\textwidth}{!}{%
  \begin{tikzpicture}[
  line cap=round, line join=round,
  arr/.style={-{Stealth[length=2.0mm,width=2.0mm]}, line width=0.7pt},
  msgarr/.style={-{Stealth[length=2.0mm,width=2.0mm]}, line width=0.7pt, blue},
  backarr/.style={-{Stealth[length=2.0mm,width=2.0mm]}, line width=0.7pt, red}, 
  node distance=18mm and 26mm,
  prior/.style={align=center, inner sep=6pt},
  client/.style={draw, rounded corners=2pt, align=left, inner sep=7pt, minimum width=2.5cm},
  data/.style={text depth=0.50ex},
  msg/.style={midway, above, align=center, font=\scriptsize, text depth=0.30ex, blue},
  msgback/.style={midway, below, align=center, font=\scriptsize, text depth=0.30ex, red},
]

\node[prior] (Q1) {$Q_1$};

\node[client, right=10mm of Q1] (C1) {%
  \textbf{Client 1}\\[1mm]
  $
  P^{\autoparent{Q_1,\lambda_1}}_{\vect{\Theta}_1\mid \vect{Z}_1=\vect{z}_1}$};

\node[client, right=26mm of C1] (C2) {%
  \textbf{Client 2}\\[1mm]
  $
  P^{\autoparent{Q_2,\lambda_2}}_{\vect{\Theta}_2\mid \vect{Z}_2=\vect{z}_2}$};

\node[right=26mm of C2, font=\Large] (dots) {$\cdots$};

\node[client, right=30mm of dots] (CK) {%
  \textbf{Client $K$}\\[1mm]
  $P^{\autoparent{Q_K,\lambda_K}}_{\vect{\Theta}_K\mid \vect{Z}_K=\vect{z}_K}$};

\node[data, below=8mm of C1] (Z1) {$\vect{z}_1$};
\node[data, below=8mm of C2] (Z2) {$\vect{z}_2$};
\node[data, below=8mm of CK] (ZK) {$\vect{z}_K$};

\draw[arr] (Z1) -- (C1.south);
\draw[arr] (Z2) -- (C2.south);
\draw[arr] (ZK) -- (CK.south);

\draw[arr] (Q1) -- (C1);

\draw[msgarr]  ([yshift=2.5mm]C1.east) --([yshift=2.5mm]C2.west)
node[msg] {$
P^{\autoparent{Q_1,\lambda_1}}_{\vect{\Theta}_1\mid \vect{Z}_1=\vect{z}_1}=Q_2
$\\[-2mm]};

\draw[msgarr]   ([yshift=2.5mm]C2.east)
 --  ([yshift=2.5mm]dots.west)
node[msg] {$
P^{\autoparent{Q_2,\lambda_2}}_{\vect{\Theta}_2\mid \vect{Z}_2=\vect{z}_2}=Q_3
$\\[-2mm]};

\draw[msgarr] ([yshift=2.5mm]dots.east) -- ([yshift=2.5mm]CK.west) 
node[msg] {$
P^{\autoparent{Q_{K-1},\lambda_{K-1}}}_{\vect{\Theta}_{K-1}\mid \vect{Z}_{K-1}=\vect{z}_{K-1}}=Q_K
$\\[-2mm]};

\draw[backarr] ([yshift=-2.5mm]C2.west) -- ([yshift=-2.5mm]C1.east)
node[msgback] {$P^{\autoparent{Q_K,\lambda_K}}_{\vect{\Theta}_K\mid \vect{Z}_K=\vect{z}_K}$};

\draw[backarr] ([yshift=-2.5mm]dots.west) -- ([yshift=-2.5mm]C2.east)
node[msgback] {$P^{\autoparent{Q_K,\lambda_K}}_{\vect{\Theta}_K\mid \vect{Z}_K=\vect{z}_K}$};

\draw[backarr] ([yshift=-2.5mm]CK.west) -- ([yshift=-2.5mm]dots.east)
node[msgback] {$P^{\autoparent{Q_K,\lambda_K}}_{\vect{\Theta}_K\mid \vect{Z}_K=\vect{z}_K}$};

\end{tikzpicture}}
\caption{Nested Structure: the Gibbs measure produced by client $k$ becomes the reference measure $Q_{k+1}$ used by client $k+1$.}
\vspace{-2ex}
\label{FigNestedStructure}
\end{figure*}
\enlargethispage{-0.10in}
Figure~\ref{FigNestedStructure} depicts the forward--backward peer-to-peer communication protocol used in this work.
In the forward direction (blue arrows), for all $k \in \{1,2,\ldots,K-1\}$, client~$k$ transmits to client~$k+1$ the $(\mathsf{L}_k,Q_k,\lambda_k)$-Gibbs probability measure $ P^{\autoparent{Q_{k}, \lambda_k}}_{\vect{\Theta}_k \mid \vect{Z}_k = \vect{z}_{k}}$ in~\eqref{EqGibbsRNDFixedDataset}, obtained as the solution to the ERM-RER problem in \eqref{EqJul25at9h55in2025NiceA}.
Client~$k+1$ adopts this transmitted measure as its reference measure~$Q_{k+1}$ in \eqref{EqJul25at9h55in2025NiceA}.
%
This choice induces a nested structure of the reference measure, as $
Q_{k+1} = P^{\autoparent{Q_{k}, \lambda_k}}_{\vect{\Theta}_k \mid \vect{Z}_k = \vect{z}_{k}}$,  with $k \in \{1,2,\ldots,K-1\}$.
This is formalized later in Theorem~\ref{TheoremDec21at18h00in2025inMadrid}.
The backward direction (red arrows) disseminates the final Gibbs probability measure.
More specifically, once client~$K$ has computed its $(\mathsf{L}_K,Q_K,\lambda_K)$-Gibbs probability measure $ P^{\autoparent{Q_{K}, \lambda_K}}_{\vect{\Theta}_K \mid \vect{Z}_K = \vect{z}_{K}}$  in~\eqref{EqGibbsRNDFixedDataset}, this measure is transmitted back along the chain, from client~$K$ to client~$K-1$, then from client~$K-1$ to client~$K-2$, and so on until client~$1$.
This backward transmission provides all clients with access to the same final Gibbs algorithm.
The following subsection describes such a final algorithm.

\subsection{Decentralized Algorithms}
\label{SubsecMain}
 The main result of this work is presented by the following theorem.  
\begin{theorem}
\label{TheoremDec21at18h00in2025inMadrid}
For all $k \inCountK{K}$, consider an $(\mathsf{L}_k,Q_{k},\lambda_k)$-Gibbs conditional probability measure, denoted by $P^{\autoparent{Q_{k}, \lambda_k}}_{\vect{\Theta}_k | \vect{Z}_k} \in \simplex{\set{M}_k | \set{Z}^{n_k}_k}$, where the reference measure $Q_{k}$
satisfies 
\begin{IEEEeqnarray}{rcl}
\label{EqDec27at10h57in2025inMadrid}
Q_k &=& \left\lbrace
\begin{array}{cc}
Q_1 & \text{if } k = 1\\
P^{\autoparent{Q_{k-1},\lambda_{k-1} }}_{\vect{\Theta}_{k-1} | \vect{Z}_{k-1} = \vect{z}_{k-1}}  & \text{if } k \geqslant 2,
\end{array}
\right.
\end{IEEEeqnarray}
for some given $Q_1$. Then, for all $\vect{\theta} \in \supp Q_1$, 
\begin{IEEEeqnarray}{rCl}
\label{EqDec30at11h25in2025Antibes}
\RND{P^{\autoparent{Q_{k}, \lambda_k}}_{\vect{\Theta}_k | \vect{Z}_k = \vect{z}_{k}}}{Q_1}\autoparent{\vect{\theta}} &=& \frac{\exp\autoparent{\sum_{j=1}^{k} \frac{-1}{\lambda_j} \EmpRiskk{\vect{z}}{\vect{\theta}}{j}}}
{\int \exp\autoparent{\sum_{i=1}^{k} \frac{-1}{\lambda_i} \EmpRiskk{\vect{z}}{\vect{\nu}}{i}}\mathrm{d} Q_1(\vect{\nu})}.
\end{IEEEeqnarray}
\end{theorem}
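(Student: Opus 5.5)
We argue by induction on $k$, using the chain rule for Radon--Nikodym derivatives together with the explicit Gibbs density in \eqref{EqGibbsRNDFixedDataset}.

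\textbf{Base case.} For $k=1$ there is nothing to do beyond unpacking Definition~\ref{DefGibbsModels}. Since $Q_1$ is the reference measure, \eqref{EqGibbsRNDFixedDataset} gives for all $\vect{\theta}\in\supp Q_1$
\[
\RND{P^{\autoparent{Q_{1}, \lambda_1}}_{\vect{\Theta}_1 | \vect{Z}_1 = \vect{z}_{1}}}{Q_1}\autoparent{\vect{\theta}}=\exp\autoparent{\frac{-1}{\lambda_1}\EmpRiskk{\vect{z}}{\vect{\theta}}{1}-\mathsf{K}_{1,Q_1,\vect{z}_1}\autoparent{\frac{-1}{\lambda_1}}}.
\]
By \eqref{EqJuly28at06h58in2025BusToSophia}, $\exp\autoparent{\mathsf{K}_{1,Q_1,\vect{z}_1}(-1/\lambda_1)}$ is exactly the denominator in \eqref{EqDec30at11h25in2025Antibes} with $k=1$.

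\textbf{Inductive step.} Assume \eqref{EqDec30at11h25in2025Antibes} holds for $k-1$, and write $Q_k=P^{\autoparent{Q_{k-1},\lambda_{k-1}}}_{\vect{\Theta}_{k-1}|\vect{Z}_{k-1}=\vect{z}_{k-1}}$ as in \eqref{EqDec27at10h57in2025inMadrid}.

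\emph{Absolute continuity chain.} By construction $P^{(Q_k,\lambda_k)}\ll Q_k$. By the induction hypothesis $Q_k\ll Q_1$, with a density that is strictly positive on $\supp Q_1$ because the exponential is positive and the normalizer is finite. Hence $\supp Q_k=\supp Q_1$ up to null sets, and the chain rule applies:
\[
\RND{P^{(Q_k,\lambda_k)}}{Q_1}=\RND{P^{(Q_k,\lambda_k)}}{Q_k}\,\RND{Q_k}{Q_1},\qquad Q_1\text{-a.e.}
\]

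\emph{Product form.} Substitute \eqref{EqGibbsRNDFixedDataset} for the first factor and the induction hypothesis for the second. The numerator becomes $\exp\autoparent{\sum_{j=1}^k \frac{-1}{\lambda_j}\EmpRiskk{\vect{z}}{\vect{\theta}}{j}}$. The denominator is the product of $\int \exp(\sum_{i<k}\cdots)\,\mathrm{d}Q_1$ and $\exp\autoparent{\mathsf{K}_{k,Q_k,\vect{z}_k}(-1/\lambda_k)}$.

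\emph{Normalizer identity.} Compute $\exp(\mathsf{K}_{k,Q_k,\vect{z}_k}(-1/\lambda_k))=\int \exp(-\EmpRiskk{\vect{z}}{\vect{\nu}}{k}/\lambda_k)\,\mathrm{d}Q_k(\vect{\nu})$ by changing measure to $Q_1$ using the induction hypothesis. This turns it into $\int\exp(\sum_{i=1}^k\cdots)\,\mathrm{d}Q_1$ divided by $\int\exp(\sum_{i=1}^{k-1}\cdots)\,\mathrm{d}Q_1$. The $(k-1)$-fold normalizers then cancel, yielding \eqref{EqDec30at11h25in2025Antibes}.

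A cleaner alternative is to observe that both sides are probability densities with respect to $Q_1$ that are proportional. Equality of the normalizations is then automatic, and no separate computation of the constant is needed.

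\textbf{Main obstacle.} The delicate part is measure-theoretic rather than algebraic. We must justify the chain rule and the change of variables in the normalizer, which requires $Q_k\ll Q_1$ with a positive density, and we must make sure finiteness is preserved. The finiteness conditions $\mathsf{K}_{k,Q_k,\vect{z}_k}(-1/\lambda_k)<+\infty$ are guaranteed by the hypothesis that each $P^{(Q_k,\lambda_k)}$ is a Gibbs measure, since Definition~\ref{DefGibbsModels} imposes them. Finiteness of the pooled normalizer $\int\exp(\sum_{i=1}^k\cdots)\,\mathrm{d}Q_1$ is in fact automatic: the empirical risks are nonnegative by \eqref{EqLossFunction}, so the integrand is bounded by $1$ and $Q_1$ is a probability measure. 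Positivity of the pooled normalizer also needs justification; it follows from finiteness of the risks, so the integrand is strictly positive on $\supp Q_1$. Finally, the densities are equalities only up to null sets, while the statement claims equality for all $\vect{\theta}\in\supp Q_1$. We handle this by choosing the version of each Radon--Nikodym derivative given by the explicit formula in \eqref{EqGibbsRNDFixedDataset}, so that the identity holds pointwise by construction.
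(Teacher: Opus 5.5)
Your proof is correct. It uses the same two ingredients as the paper, namely the chain rule for Radon--Nikodym derivatives and the explicit density \eqref{EqGibbsRNDFixedDataset}, but organizes the bookkeeping differently.

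Write $Z_k \triangleq \int \exp(-\sum_{i=1}^{k}\mathsf{L}_i(\vect{z}_i,\vect{\nu})/\lambda_i)\,\mathrm{d}Q_1(\vect{\nu})$. Once the constant $C_k$ of \eqref{EqDec27at11h22in2025inMadrid} is substituted, Lemma~\ref{LemmaGeneralNiceResult} says exactly $\exp(\mathsf{K}_{k,Q_k,\vect{z}_k}(-1/\lambda_k)) = Z_k/Z_{k-1}$, which is your normalizer identity.

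The paper treats this as a separately proved lemma and proceeds as follows:
\begin{itemize}
\item it writes $\frac{\mathrm{d}P^{(Q_k,\lambda_k)}}{\mathrm{d}Q_k}=\frac{\mathrm{d}P^{(Q_k,\lambda_k)}}{\mathrm{d}Q_1}\frac{\mathrm{d}Q_1}{\mathrm{d}Q_k}$;
\item it inverts $\frac{\mathrm{d}Q_1}{\mathrm{d}Q_k}$ with a reciprocal rule, which needs mutual absolute continuity of the Gibbs measures and their references;
\item it expands $\frac{\mathrm{d}Q_k}{\mathrm{d}Q_1}$ as the telescoping product \eqref{EqJan5qt13h51in2026inSophiaA}--\eqref{EqJan5qt13h51in2026inSophiaD}, applying the lemma to each factor.
\end{itemize}

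You instead derive the identity inline, by changing measure from $Q_k$ to $Q_1$ with the induction hypothesis. You also use the chain rule only in the forward direction $P^{(Q_k,\lambda_k)}\ll Q_k\ll Q_1$, so the reciprocal rule is never needed.

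What each approach buys:
\begin{itemize}
\item Your version is self-contained. It makes explicit that the lemma at index $k$ requires knowing $\mathrm{d}Q_k/\mathrm{d}Q_1$, i.e.\ the theorem at index $k-1$. The paper leaves this dependence implicit by deferring the lemma's proof.
\item The paper's version isolates the model-independent constant $C_k$, which the companion report characterizes through relative entropies.
\item Your proportionality shortcut is also sound and would shorten either argument.
\end{itemize}

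One minor remark concerns finiteness of $Z_k$. Definition~\ref{DefGibbsModels} only asks the reference to be $\sigma$-finite, so you should not appeal to $Q_1$ being a probability measure. Instead bound $Z_k \le Z_1 = \exp(\mathsf{K}_{1,Q_1,\vect{z}_1}(-1/\lambda_1)) < +\infty$, or use the recursion $Z_k = Z_{k-1}\exp(\mathsf{K}_{k,Q_k,\vect{z}_k}(-1/\lambda_k))$ that your induction already produces.
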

\begin{IEEEproof}
The proof is presented in \cite{InriaRR9608}.
\end{IEEEproof}

The choice of reference measures $Q_1$, $Q_2$, $\ldots$, $Q_K$ in~\eqref{EqDec27at10h57in2025inMadrid} induces a \emph{nested} structure. Under this structure, the training performed by client~$k$ uses only its local dataset, while the influence of the previous clients' datasets is carried out through the reference measure $Q_k$.
The relevance of this nested structure, in which client $k$ shares its Gibbs probability measure (algorithm) with its successor, client $k+1$, is made clear by Lemma~\ref{LemmaSolution}. 
More specifically, the probability measure
$P^{\autoparent{Q_{k}, \lambda_k}}_{\vect{\Theta}_k \mid \vect{Z}_k = \vect{z}_{k}}$ in \eqref{EqDec30at11h25in2025Antibes}
is the unique solution to \eqref{EqJul25at9h55in2025NiceA} and, simultaneously, the unique minimizer of the following optimization problem:
\begin{IEEEeqnarray}{rCl}
\label{EqDec28at23h24in2025FlyingToNice}
\min_{P \in \simplexabs{Q_1}{\set{M}_k}}
\int \autoparent{\sum_{j=1}^{k} \frac{1}{\lambda_j} \EmpRiskk{\vect{z}}{\vect{\theta}}{j}}
\,\mathrm{d}P\autoparent{\vect{\theta}} + \KL{P}{Q_1}.\spnum
\end{IEEEeqnarray}
The following corollary of Theorem~\ref{TheoremDec21at18h00in2025inMadrid} formalizes this observation.
\begin{corollary}\label{CorJanuary1at20h49in2026HomeNice}
Under the assumption that the measures $Q_1$, $Q_2$, $\ldots$, $Q_K$ satisfy~\eqref{EqDec27at10h57in2025inMadrid}, the solutions to the optimization problems in~\eqref{EqJul25at9h55in2025NiceA} and~\eqref{EqDec28at23h24in2025FlyingToNice} are unique and coincide with the $(\mathsf{L}_k, Q_{k}, \lambda_k)$-Gibbs probability measure $P^{\autoparent{Q_{k}, \lambda_k}}_{\vect{\Theta}_k \mid \vect{Z}_k = \vect{z}_{k}}$ in~\eqref{EqDec30at11h25in2025Antibes}.
\end{corollary}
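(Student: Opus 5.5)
The plan has two parts, one for each optimization problem. For~\eqref{EqJul25at9h55in2025NiceA}, Lemma~\ref{LemmaSolution} applies directly, once the problem has a solution, with $Q_k$ given by~\eqref{EqDec27at10h57in2025inMadrid}. For~\eqref{EqDec28at23h24in2025FlyingToNice}, I will show that it is itself an ERM-RER problem of the form~\eqref{EqJul25at9h55in2025NiceA}, posed relative to $Q_1$. I will then use Theorem~\ref{TheoremDec21at18h00in2025inMadrid} to identify its Gibbs solution with $P^{\autoparent{Q_{k}, \lambda_k}}_{\vect{\Theta}_k \mid \vect{Z}_k = \vect{z}_{k}}$.

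\emph{Rewriting~\eqref{EqDec28at23h24in2025FlyingToNice} as an ERM-RER problem.} Define the aggregated risk
\begin{IEEEeqnarray}{rCl}
\mathsf{L}^{(k)}(\vect{\theta}) & \triangleq & \sum_{j=1}^{k} \frac{1}{\lambda_j} \EmpRiskk{\vect{z}}{\vect{\theta}}{j},
\end{IEEEeqnarray}
which is nonnegative and measurable. Then~\eqref{EqDec28at23h24in2025FlyingToNice} reads $\min_{P \in \simplexabs{Q_1}{\set{M}_k}} \int \mathsf{L}^{(k)} \mathrm{d}P + 1\cdot\KL{P}{Q_1}$. This is~\eqref{EqJul25at9h55in2025NiceA} with loss $\mathsf{L}^{(k)}$, reference $Q_1$ and regularization factor $1$.

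\emph{Existence of the minimizer.} Lemma~\ref{LemmaSolution} assumes that a solution exists, so existence must be established here. The problem is nondegenerate because $\mathsf{L}^{(k)}\geqslant 0$ gives $\int \exp(-\mathsf{L}^{(k)})\,\mathrm{d}Q_1 \leqslant Q_1(\set{M}_k)<\infty$ when $Q_1$ is a probability measure. In that case the normalization constant is finite, and it is also positive. Rather than invoking Lemma~\ref{LemmaSolution} abstractly, I would verify optimality directly through the Donsker--Varadhan identity. Let $P^\star$ denote the Gibbs measure with $\frac{\mathrm{d}P^\star}{\mathrm{d}Q_1} = \exp(-\mathsf{L}^{(k)})/\int \exp(-\mathsf{L}^{(k)})\,\mathrm{d}Q_1$. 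For every $P\ll Q_1$,
\begin{IEEEeqnarray}{rCl}
\int \mathsf{L}^{(k)}\mathrm{d}P + \KL{P}{Q_1} & = & \KL{P}{P^\star} - \log\int \exp(-\mathsf{L}^{(k)})\,\mathrm{d}Q_1 .
\end{IEEEeqnarray}
Hence $P^\star$ is the unique minimizer, by strict positivity of relative entropy unless $P=P^\star$. The same identity applied with $\mathsf{L}_k/\lambda_k$ and $Q_k$ gives existence and uniqueness for~\eqref{EqJul25at9h55in2025NiceA}. In that case finiteness of $\mathsf{K}_{k,Q_k,\vect{z}_k}(-1/\lambda_k)$ comes from Definition~\ref{DefGibbsModels}.

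\emph{Identification and the main obstacle.} By Theorem~\ref{TheoremDec21at18h00in2025inMadrid}, the density of $P^{\autoparent{Q_{k}, \lambda_k}}_{\vect{\Theta}_k \mid \vect{Z}_k = \vect{z}_{k}}$ with respect to $Q_1$ in~\eqref{EqDec30at11h25in2025Antibes} is exactly $\frac{\mathrm{d}P^\star}{\mathrm{d}Q_1}$, so the two minimizers coincide. The main obstacle is the domains. Problem~\eqref{EqJul25at9h55in2025NiceA} is posed over $\simplexabs{Q_k}{\set{M}_k}$, whereas~\eqref{EqDec28at23h24in2025FlyingToNice} is posed over $\simplexabs{Q_1}{\set{M}_k}$. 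I will resolve this by showing by induction that each $Q_k$ is mutually absolutely continuous with $Q_1$: its density is a strictly positive exponential of a finite risk, divided by a finite positive constant. Hence the two feasible sets agree, supports agree, and~\eqref{EqDec30at11h25in2025Antibes} determines the measure $Q_1$-a.e.
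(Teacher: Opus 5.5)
Your proposal is correct and follows the paper's own route. Lemma~\ref{LemmaSolution} handles \eqref{EqJul25at9h55in2025NiceA}; \eqref{EqDec28at23h24in2025FlyingToNice} is recognized as an ERM-RER problem relative to $Q_1$ with loss $\sum_{j=1}^{k}\frac{1}{\lambda_j}\mathsf{L}_j(\vect{z}_j,\cdot)$ and unit regularization; and Theorem~\ref{TheoremDec21at18h00in2025inMadrid} identifies the two Gibbs measures. Your added Gibbs-variational identity is a welcome strengthening, since it establishes existence directly rather than assuming it, as Lemma~\ref{LemmaSolution} does.
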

\enlargethispage{-0.10in}
Given $k > 1$, the optimization problem in~\eqref{EqJul25at9h55in2025NiceA} depends exclusively on the local training dataset $\vect{z}_k$. While the reference $Q_k$ in~\eqref{EqDec27at10h57in2025inMadrid} depends on the training datasets of the $k-1$ previous clients, such datasets do not need to be explicitly known for solving~\eqref{EqJul25at9h55in2025NiceA}. In particular, solving~\eqref{EqJul25at9h55in2025NiceA} requires access to the probability measure $Q_k \in \simplex{\set{M}_k}$, but not access to the training datasets of all previous clients. This is because for fixed training datasets, $Q_k$ is simply a probability measure on the model space.
In contrast, solving the optimization problem in~\eqref{EqDec28at23h24in2025FlyingToNice} requires knowing the training datasets of client~$j$, for all $j \inCountK{k}$. In this case, the reference measure,  $Q_1$,  does not depend on any training dataset. 
The fact that problems~\eqref{EqJul25at9h55in2025NiceA} and~\eqref{EqDec28at23h24in2025FlyingToNice} share the same solution unveils an important observation: providing client~$k$ with a reference measure $Q_k$ of the form in~\eqref{EqDec27at10h57in2025inMadrid} reproduces the effect of having access to the training datasets of the $k-1$ previous clients.
The following subsection unveils, under specific conditions, the centralized-type guarantees of the nested structure.

\subsection{Centralized Performance Guarantees}
An important observation is that a strategic choice of $\lambda_1$, $\lambda_2$, $\ldots$, $\lambda_K$ in~\eqref{EqDec30at11h25in2025Antibes} can lead to achieving the same Gibbs probability distribution as in a setting in which the training datasets of all clients are available to all clients. 
This describes a decentralized system whose distributed nature does not prevent it from achieving the same Gibbs algorithm that would have been obtained if all the training datasets were available to all clients.
The following theorem formalizes this observation.
%
\begin{theorem}
\label{EqDec30at21h00in2025inAntibes}
Assume that the loss functions in \eqref{EqLossFunction} satisfy $\ell_1=\ell_2 = \ldots = \ell_K= \ell$ and for all $k \in \{1, \ldots, K\}$, 
\begin{IEEEeqnarray}{rcl}
\label{EqJanuary1at15h16in2026atHomeNice}
\lambda_k  = \frac{n_0\lambda_0}{n_k},
\end{IEEEeqnarray}
for some $\lambda_0 > 0$ and some loss function $\ell$.
Consider some measures $Q_1$, $Q_2$, $\cdots$, $Q_K$ satisfying~\eqref{EqDec27at10h57in2025inMadrid}.
Then, for all $\vect{\theta}\in\supp Q_1$, it follows that,
\begin{IEEEeqnarray}{rCl}
\label{EqJanuary1at16h30in2026atHomeNice}
\RND{P^{\autoparent{Q_{K}, \lambda_K}}_{\vect{\Theta}_K | \vect{Z}_K = \vect{z}_{K}}}{P^{\autoparent{Q_1, \lambda_0}}_{\vect{\Theta}_K | \vect{Z}_0 = \vect{z}_{0}}}\autoparent{\vect{\theta}} &=&1,
\end{IEEEeqnarray}
where the probability measure $P^{\autoparent{Q_{K}, \lambda_K}}_{\vect{\Theta}_K | \vect{Z}_K = \vect{z}_{K}}$ is defined in \eqref{EqDec30at11h25in2025Antibes}; the measure  $P^{\autoparent{Q_1, \lambda_0}}_{\vect{\Theta}_K | \vect{Z}_0 = \vect{z}_{0}}$ satisfies for all $\vect{\theta}\in\supp Q_1$, 
\begin{IEEEeqnarray}{rCl}
\label{EqDec30at15h43in2025Antibes}
\RND{P^{\autoparent{Q_{1}, \lambda_0}}_{\vect{\Theta}_K | \vect{Z}_0 = \vect{z}_{0}}}{Q_1}\autoparent{\vect{\theta}} &=& \frac{\exp\autoparent{\frac{-1}{n_0\lambda_0}\sum_{i=1}^{n_0}  \ell\left(x_{0,i}, y_{0,i}, \vect{\theta} \right)}}{\int \exp\autoparent{\frac{-1}{n_0\lambda_0}\sum_{i=1}^{n_0}  \ell\left(x_{0,i}, y_{0,i}, \vect{\nu} \right)}\mathrm{d} Q_1(\vect{\nu})}; \squeezeequ\spnum
\end{IEEEeqnarray}
and $\left(x_{0,i}, y_{0,i}\right)$ are data points of the aggregated dataset $\vect{z}_0$ in~\eqref{EqDatasetZero}.
\end{theorem}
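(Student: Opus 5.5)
The plan is to reduce everything to Theorem~\ref{TheoremDec21at18h00in2025inMadrid} with $k = K$, which gives the Radon--Nikodym derivative of the final decentralized measure with respect to $Q_1$. It then suffices to show that the exponent in~\eqref{EqDec30at11h25in2025Antibes} coincides with the exponent in~\eqref{EqDec30at15h43in2025Antibes}. Once the two densities with respect to $Q_1$ agree pointwise on $\supp Q_1$, the two probability measures coincide. The chain rule for Radon--Nikodym derivatives then yields~\eqref{EqJanuary1at16h30in2026atHomeNice}.

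The key computation is as follows. With $\ell_1 = \ldots = \ell_K = \ell$ and the choice~\eqref{EqJanuary1at15h16in2026atHomeNice}, I substitute the definition~\eqref{EqEmpiricalRisk} of the empirical risk. For each $j$,
\begin{IEEEeqnarray}{rCl}
\frac{1}{\lambda_j}\mathsf{L}_j(\vect{z}_j,\vect{\theta}) & = & \frac{n_j}{n_0\lambda_0}\cdot\frac{1}{n_j}\sum_{i=1}^{n_j}\ell(x_{j,i},y_{j,i},\vect{\theta}) \nonumber \\
& = & \frac{1}{n_0\lambda_0}\sum_{i=1}^{n_j}\ell(x_{j,i},y_{j,i},\vect{\theta}). \nonumber
\end{IEEEeqnarray}
The factor $n_j$ cancels, which is exactly the purpose of the scaling. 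Summing over $j = 1,\ldots,K$ and using the concatenation~\eqref{EqDatasetZero}--\eqref{EqDatasetZeroExplicit}, the double sum over $(j,i)$ is a re-indexing of the single sum over the $n_0 = \sum_j n_j$ points of $\vect{z}_0$. Hence
\[
\sum_{j=1}^{K}\frac{1}{\lambda_j}\mathsf{L}_j(\vect{z}_j,\vect{\theta}) = \frac{1}{n_0\lambda_0}\sum_{i=1}^{n_0}\ell(x_{0,i},y_{0,i},\vect{\theta}).
\]
The same identity holds with $\vect{\theta}$ replaced by $\vect{\nu}$, so the numerators and the normalizing integrals of the two expressions agree.

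To conclude, I note that both measures are absolutely continuous with respect to $Q_1$ and have the same strictly positive density on $\supp Q_1$. They are therefore mutually absolutely continuous and in fact equal. It follows that
\[
\frac{\mathrm{d}P^{(Q_K,\lambda_K)}_{\vect{\Theta}_K|\vect{Z}_K=\vect{z}_K}}{\mathrm{d}P^{(Q_1,\lambda_0)}_{\vect{\Theta}_K|\vect{Z}_0=\vect{z}_0}} = \frac{\mathrm{d}P^{(Q_K,\lambda_K)}_{\vect{\Theta}_K|\vect{Z}_K=\vect{z}_K}/\mathrm{d}Q_1}{\mathrm{d}P^{(Q_1,\lambda_0)}_{\vect{\Theta}_K|\vect{Z}_0=\vect{z}_0}/\mathrm{d}Q_1} = 1
\]
for all $\vect{\theta} \in \supp Q_1$.

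The main obstacle is only a technical one. I need the normalizing constant to be finite and positive, so that the measure in~\eqref{EqDec30at15h43in2025Antibes} is well defined. Finiteness follows because the integral equals the normalizer already guaranteed by Theorem~\ref{TheoremDec21at18h00in2025inMadrid}, whose hypotheses ensure each Gibbs measure in the chain exists. Positivity follows because the integrand is strictly positive and $Q_1$ is nonzero. I also need the Radon--Nikodym derivatives to be handled as $Q_1$-a.e. identities, with the pointwise statement on $\supp Q_1$ read in the same sense as in Theorem~\ref{TheoremDec21at18h00in2025inMadrid}.
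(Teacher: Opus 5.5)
Your proposal is correct and is essentially the paper's argument. You apply Theorem~\ref{TheoremDec21at18h00in2025inMadrid} with $k=K$, use $\ell_1=\cdots=\ell_K=\ell$ and $\lambda_j = n_0\lambda_0/n_j$ to rewrite $\sum_{j=1}^{K}\frac{1}{\lambda_j}\mathsf{L}_j(\vect{z}_j,\vect{\theta})$ as $\frac{1}{n_0\lambda_0}\sum_{i=1}^{n_0}\ell(x_{0,i},y_{0,i},\vect{\theta})$ (the aggregated-risk decomposition the paper invokes right after the statement), and conclude that both measures have the same density with respect to $Q_1$, so the Radon--Nikodym derivative between them is $1$, with the normalizing constant handled adequately.
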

\begin{IEEEproof}
The proof is presented in \cite{InriaRR9608}.
\end{IEEEproof}
The relevance of Theorem~\ref{EqDec30at21h00in2025inAntibes} is highlighted by the following observations. 
\enlargethispage{-0.10in}
Under the assumptions of Theorem~\ref{EqDec30at21h00in2025inAntibes}, in particular that the loss functions satisfy $\ell_1=\ell_2=\cdots=\ell_K=\ell$, the equality in~\eqref{EqJanuary1at15h16in2026atHomeNice} together with~\cite[Lemma~4]{perlaza2023validation} allows rewriting the optimization problem in~\eqref{EqDec28at23h24in2025FlyingToNice} as
\begin{IEEEeqnarray}{rCl}
\label{EqJanuary1at15h20in2026atHomeNice}
\min_{P \in \simplexabs{Q_1}{\set{M}_K}}
\int\frac{1}{n_0}\sum_{i=1}^{n_0}  \ell\left(x_{0,i}, y_{0,i}, \vect{\theta} \right)
\,\mathrm{d}P\autoparent{\vect{\theta}} + \lambda_0\KL{P}{Q_1},\middlesqueezeequ\spnum
\end{IEEEeqnarray}
which requires access to the training datasets of all clients.
%
%
Interestingly, from Lemma~\ref{LemmaSolution}, it follows that the probability measure $P^{\autoparent{Q_1, \lambda_0}}_{\vect{\Theta}_K | \vect{Z}_0 = \vect{z}_{0}}$ in \eqref{EqDec30at15h43in2025Antibes} is the solution to~\eqref{EqJanuary1at15h20in2026atHomeNice}.
More importantly, from Lemma~\ref{LemmaJanuary1at14h24in2026}, if $\lambda_0$ is chosen such that 
\vspace{-1ex}
\begin{IEEEeqnarray}{rCl}
 \KL{P^{\autoparent{Q_{1}, \lambda_0}}_{\vect{\Theta}_K | \vect{Z}_0 = \vect{z}_{0}}}{Q_{1}} & = & \gamma_0,
 \vspace{-1ex}
\end{IEEEeqnarray}
for some $\gamma_0 > 0$, the probability measure $P^{\autoparent{Q_1, \lambda_0}}_{\vect{\Theta}_K | \vect{Z}_0 = \vect{z}_{0}}$ in~\eqref{EqDec30at15h43in2025Antibes} is also the solution to the following optimization problem:
\begin{subequations}
\vspace{-1.5ex}
\label{EqJanuary1at15h37in2026atHomeNice}
\begin{IEEEeqnarray}{rCl}
\min_{P \in \simplexabs{Q_1}{\set{M}_K}} & &
\int\frac{1}{n_0}\sum_{i=1}^{n_0}  \ell\left(x_{0,i}, y_{0,i}, \vect{\theta} \right)\,\mathrm{d}P\autoparent{\vect{\theta}} \\
\text{s.t.} &\quad& \KL{P}{Q_1} \leqslant \gamma_0.
\end{IEEEeqnarray}
\end{subequations}
From \eqref{EqJanuary1at16h30in2026atHomeNice}, it follows that the measures $P^{\autoparent{Q_{K}, \lambda_K}}_{\vect{\Theta}_K | \vect{Z}_K = \vect{z}_{K}}$ and $P^{\autoparent{Q_1, \lambda_0}}_{\vect{\Theta}_K | \vect{Z}_0 = \vect{z}_{0}}$ are identical, which implies that the nested structure, induced by the choice of reference measures in~\eqref{EqDec27at10h57in2025inMadrid} and the regularization factors in \eqref{EqJanuary1at15h16in2026atHomeNice},  allows achieving in a decentralized system, the same learning algorithm that would have been obtained in a centralized system in which all training datasets are available to all clients.

Under the forward--backward communication protocol in Section~\ref{SubsecComm}, after $K-1$ forward messages (blue arrows in Figure~\ref{FigNestedStructure}) client~$K$ obtains the probability measure that minimizes the empirical risk with respect to all training datasets within the neighborhood of~$Q_1$. The backward dissemination (red arrows in Figure~\ref{FigNestedStructure}) provides each client with the same Gibbs algorithm, minimizing within a neighborhood of the form in~\eqref{EqJanuary1at19h39in2026HomeNice} around~$Q_1$ the empirical risk with respect to the aggregated dataset.



\section{Proof of Main Result}
\label{SecProofMainResult}
This section first introduces the preliminaries needed for the proof of Theorem~\ref{TheoremDec21at18h00in2025inMadrid}, and then outlines the main steps. A more detailed proof appears in \cite{InriaRR9608}.

\subsection{Preliminaries}	
\enlargethispage{-0.20in}
\begin{lemma}\label{LemmaGeneralNiceResult}
For all $k \inCountK{K}$, consider an $(\mathsf{L}_k,Q_{k},\lambda_k)$-Gibbs conditional probability measure, denoted by $P^{\autoparent{Q_{k}, \lambda_k}}_{\vect{\Theta}_k | \vect{Z}_k} \in \simplex{\set{M}_k | \set{Z}^{n_k}_k}$, where the reference measure $Q_{k}$
satisfies \eqref{EqDec27at10h57in2025inMadrid}.
Then, for all $\vect{\theta} \in \supp Q_1$, 
\begin{IEEEeqnarray}{rCl}
\label{EqDec15at21h43in2025inMadridB}
\RND{P^{\autoparent{Q_{k}, \lambda_{k}}}_{\vect{\Theta}_k | \vect{Z}_k = \vect{z}_{k}}}{Q_{k}}\autoparent{\vect{\theta}}
&=& \RND{P^{\autoparent{Q_1, \lambda_k}}_{\vect{\Theta}_k | \vect{Z}_k = \vect{z}_{k}}}{Q_1}\autoparent{\vect{\theta}} \exp \autoparent{C_{k}}, 
\end{IEEEeqnarray}
where the measure $P^{\autoparent{Q_1, \lambda_k}}_{\vect{\Theta}_k | \vect{Z}_k }$ is an $(\mathsf{L}_k, Q_1, \lambda_k)$-Gibbs conditional probability measure and $C_{k} \in \mathbb{R}$ satisfies
\begin{IEEEeqnarray}{rCl}
\label{EqDec27at11h22in2025inMadrid}
\nonumber
&&C_{k}
\triangleq\\
&&\log \autoparent{\frac{ \exp\autoparent{\mathsf{K}_{k,Q_1,\vect{z}_{k}}\autoparent{\frac{-1}{\lambda_{k}}}} \int \exp\autoparent{-\displaystyle\sum_{i=1}^{k-1}~\frac{1}{\lambda_i}~\EmpRiskk{\vect{z}}{\vect{\nu}}{i}}\mathrm{d} Q_1(\vect{\nu})
}{\int \exp\autoparent{ -\sum_{j=1}^{k}~\frac{1}{\lambda_j}~\EmpRiskk{\vect{z}}{\vect{\nu}^\prime}{j}} \mathrm{d} Q_1(\vect{\nu}^\prime)}},\middlesqueezeequ \spnum
\end{IEEEeqnarray}
where the functional $\mathsf{K}_{k,Q_1,\vect{z}_{k}}$ is defined in \eqref{EqJuly28at06h58in2025BusToSophia}. 
\end{lemma}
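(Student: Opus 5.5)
Both sides of \eqref{EqDec15at21h43in2025inMadridB} can be written in closed form using the Gibbs density \eqref{EqGibbsRNDFixedDataset}. After that, the claim reduces to an identity between normalizing constants. The only tool needed is Theorem~\ref{TheoremDec21at18h00in2025inMadrid} applied at index $k-1$. In principle that theorem is proved using this lemma; to avoid circularity, I would organize the argument as an induction on $k$, proving the lemma and the closed form \eqref{EqDec30at11h25in2025Antibes} simultaneously. The case $k=1$ is trivial: $Q_1=Q_1$, the integral over an empty sum equals $\int \mathrm{d}Q_1 = 1$, and $C_1=\mathsf{K}_{1,Q_1,\vect{z}_1}(-1/\lambda_1)-\mathsf{K}_{1,Q_1,\vect{z}_1}(-1/\lambda_1)=0$.

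For $k\geqslant 2$, the first step evaluates the left-hand side. By \eqref{EqGibbsRNDFixedDataset} with reference $Q_k$, it equals $\exp(-\frac{1}{\lambda_k}\mathsf{L}_k(\vect{z}_k,\vect{\theta})-\mathsf{K}_{k,Q_k,\vect{z}_k}(-1/\lambda_k))$. The right-hand side Radon--Nikodym derivative is, by the same definition with reference $Q_1$, $\exp(-\frac{1}{\lambda_k}\mathsf{L}_k(\vect{z}_k,\vect{\theta})-\mathsf{K}_{k,Q_1,\vect{z}_k}(-1/\lambda_k))$. The $\vect{\theta}$-dependent factors therefore cancel, and \eqref{EqDec15at21h43in2025inMadridB} is equivalent to
\begin{equation*}
C_k=\mathsf{K}_{k,Q_1,\vect{z}_k}\!\left(\tfrac{-1}{\lambda_k}\right)-\mathsf{K}_{k,Q_k,\vect{z}_k}\!\left(\tfrac{-1}{\lambda_k}\right).
\end{equation*}
Comparing with \eqref{EqDec27at11h22in2025inMadrid}, it remains to show
\begin{equation*}
\exp\!\left(\mathsf{K}_{k,Q_k,\vect{z}_k}\!\left(\tfrac{-1}{\lambda_k}\right)\right)=\frac{\int \exp\left(-\sum_{j=1}^{k}\frac{1}{\lambda_j}\mathsf{L}_j(\vect{z}_j,\vect{\nu})\right)\mathrm{d}Q_1(\vect{\nu})}{\int \exp\left(-\sum_{i=1}^{k-1}\frac{1}{\lambda_i}\mathsf{L}_i(\vect{z}_i,\vect{\nu})\right)\mathrm{d}Q_1(\vect{\nu})}.
\end{equation*}

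The second step proves this identity. Since $Q_k=P^{(Q_{k-1},\lambda_{k-1})}_{\vect{\Theta}_{k-1}|\vect{Z}_{k-1}=\vect{z}_{k-1}}$, the induction hypothesis (the closed form \eqref{EqDec30at11h25in2025Antibes} at index $k-1$) gives $\frac{\mathrm{d}Q_k}{\mathrm{d}Q_1}$ explicitly. A change of measure $\int f\,\mathrm{d}Q_k=\int f\frac{\mathrm{d}Q_k}{\mathrm{d}Q_1}\mathrm{d}Q_1$ with $f=\exp(-\frac{1}{\lambda_k}\mathsf{L}_k(\vect{z}_k,\cdot))$ then produces exactly the displayed ratio: the exponents combine into the sum up to $k$, and the denominator is the normalizer up to $k-1$. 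Closing the induction requires the closed form at index $k$. It follows by multiplying the lemma's identity by $\frac{\mathrm{d}Q_k}{\mathrm{d}Q_1}$ via the chain rule, after which everything collapses to \eqref{EqDec30at11h25in2025Antibes}.

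The main obstacle is measure-theoretic bookkeeping rather than algebra. First, the chain rule $\frac{\mathrm{d}P}{\mathrm{d}Q_1}=\frac{\mathrm{d}P}{\mathrm{d}Q_k}\frac{\mathrm{d}Q_k}{\mathrm{d}Q_1}$ requires absolute continuity $Q_k\ll Q_1$. This holds because each Gibbs density is strictly positive and finite. That same positivity gives $\supp Q_k=\supp Q_1$, so the identities hold for all $\vect{\theta}\in\supp Q_1$ and not merely $Q_k$-almost surely. Second, every integral involved must be finite and positive. Finiteness follows from the nonnegativity of the losses, since the integrands are bounded by $1$ and $Q_1$ is a probability measure. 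Positivity follows because the integrands are strictly positive, which makes the logarithm defining $C_k$ well defined.
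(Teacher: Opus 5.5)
Your proof is correct, but it is organized differently from the paper. The paper states this lemma as a standalone preliminary, with its proof deferred to the report, and uses it as an \emph{input} to Theorem~\ref{TheoremDec21at18h00in2025inMadrid}. There, by the chain rule, $\frac{\mathrm{d}Q_k}{\mathrm{d}Q_1}$ is written as the product of the Gibbs densities $\frac{\mathrm{d}Q_{j+1}}{\mathrm{d}Q_j}$, $j<k$. The lemma is then applied to each factor, and the constants $\exp(C_j)$ telescope.

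You reverse the dependence. Writing both densities out via \eqref{EqGibbsRNDFixedDataset}, you observe that the lemma is exactly the normalizer identity $\exp\bigl(\mathsf{K}_{k,Q_k,\vect{z}_k}(-1/\lambda_k)\bigr)=N_k/N_{k-1}$, where $N_j\triangleq\int\exp\bigl(-\sum_{i=1}^{j}\frac{1}{\lambda_i}\mathsf{L}_i(\vect{z}_i,\vect{\nu})\bigr)\,\mathrm{d}Q_1(\vect{\nu})$. You then obtain this identity by a single change of measure that uses the closed form at index $k-1$, inside a joint induction.

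Your route makes explicit two things the paper's ordering leaves implicit. First, $C_k=\mathsf{K}_{k,Q_1,\vect{z}_k}(-1/\lambda_k)-\mathsf{K}_{k,Q_k,\vect{z}_k}(-1/\lambda_k)$ is merely a ratio of normalizing constants. Second, proving the lemma at index $k$ already requires the explicit form of $\frac{\mathrm{d}Q_k}{\mathrm{d}Q_1}$, so some induction is unavoidable. The paper's route, in turn, isolates $C_k$ as a model-independent constant, which the authors later characterize through relative entropies, and turns the theorem into a one-line telescoping consequence.

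Your measure-theoretic checks are the right ones: strictly positive, finite densities give mutual absolute continuity and hence $\supp Q_k=\supp Q_1$, and all normalizers lie in $(0,\infty)$. One small note concerns your base case and finiteness bound, which use $Q_1(\set{M}_1)=1$. This is in fact necessary for the statement at $k=1$, since in general $C_1=\log Q_1(\set{M}_1)$, which vanishes only when $Q_1$ is normalized. So reading $Q_1$ as a probability measure is justified, despite the $\sigma$-finite wording of Definition~\ref{DefGibbsModels}. For $k\geqslant 2$, finiteness would also follow from the Gibbs condition at client~1, because $N_j\leqslant N_1$.
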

%
\begin{IEEEproof}
The proof is presented in \cite{InriaRR9608}. 
\end{IEEEproof}
The relevance of this lemma lies in the fact that the Radon--Nikodym derivative of an $(\mathsf{L}_k,Q_k,\lambda_k)$-Gibbs conditional probability measure with respect to its reference measure $Q_k$ can be re-expressed relative to the fixed reference measure $Q_1$, up to a factor, $C_{k}$ in \eqref{EqDec27at11h22in2025inMadrid}, that does not depend on the model. Moreover, this factor admits an information-theoretic characterization in terms of Kullback--Leibler divergences; see \cite[Lemma~9]{InriaRR9608}. Lemma~\ref{LemmaGeneralNiceResult} constitutes a main step in the proof of Theorem~\ref{TheoremDec21at18h00in2025inMadrid} and is explained in the following subsection.

\subsection{Sketched proof of Theorem~\ref{TheoremDec21at18h00in2025inMadrid}}
Using \cite[Theorem~4]{InriaRR9591} (chain rule), the Radon--Nikodym derivative in the left-hand side of \eqref{EqDec15at21h43in2025inMadridB} can be written as follows
\vspace{-2ex}
\begin{IEEEeqnarray}{rcl}
\label{EqJan5at11h58in2026inSophia}
\RND{P^{\autoparent{Q_{k}, \lambda_{k}}}_{\vect{\Theta}_k | \vect{Z}_k = \vect{z}_{k}}}{Q_{k}}\autoparent{\vect{\theta}}
&=& \RND{P^{\autoparent{Q_{k}, \lambda_{k}}}_{\vect{\Theta}_k | \vect{Z}_k = \vect{z}_{k}}}{Q_{1}}\autoparent{\vect{\theta}}
\RND{Q_{1}}{Q_{k}}\autoparent{\vect{\theta}},
\end{IEEEeqnarray}
Then, from Lemma~\ref{LemmaGeneralNiceResult} and \cite[Theorem~5]{InriaRR9591}, the equality in \eqref{EqJan5at11h58in2026inSophia} yields
\begin{IEEEeqnarray}{rcl}
\label{EqJan5at11h59in2026inSophia}
 \RND{P^{\autoparent{Q_{k}, \lambda_{k}}}_{\vect{\Theta}_k | \vect{Z}_k = \vect{z}_{k}}}{Q_{1}}\autoparent{\vect{\theta}} &=&
\RND{Q_{k}}{Q_{1}}\autoparent{\vect{\theta}}  \RND{P^{\autoparent{Q_1, \lambda_k}}_{\vect{\Theta}_k | \vect{Z}_k = \vect{z}_{k}}}{Q_1}\autoparent{\vect{\theta}} \exp \autoparent{C_{k}}.
\end{IEEEeqnarray}
Moreover, from \cite[Lemma~3]{perlaza2024empirical}, the Radon--Nikodym derivatives in~\eqref{EqJan5at11h58in2026inSophia} and in~\eqref{EqJan5at11h59in2026inSophia} are well defined. 

\enlargethispage{-0.10in}
The next step consists in expressing $\RND{Q_k}{Q_1}$ using the nested definition of the reference measures in~\eqref{EqDec27at10h57in2025inMadrid}. This nested construction implies the absolute continuity assumptions required to apply a chain rule for Radon--Nikodym derivatives. The resulting product decomposition isolates successive changes of reference measure and therefore allows each factor to be handled separately using Lemma~\ref{LemmaGeneralNiceResult}.
 Then, from \cite[Theorem~4]{InriaRR9591}, it follows that
\begin{IEEEeqnarray}{rCl}
\label{EqJan5qt13h51in2026inSophiaA}
\RND{Q_k}{Q_1}\autoparent{\vect{\theta}}
&=&
\prod_{j=2}^{k} \RND{Q_j}{Q_{j-1}}\autoparent{\vect{\theta}}\\
\label{EqJan5qt13h51in2026inSophiaB}
&=&
\RND{Q_2}{Q_{1}}\autoparent{\vect{\theta}}\prod_{j=3}^{k} \RND{Q_j}{Q_{j-1}}\autoparent{\vect{\theta}}\\
\label{EqJan5qt13h51in2026inSophiaC}
&=& \RND{P^{\autoparent{Q_1, \lambda_1}}_{\vect{\Theta}_1 \mid \vect{Z}_1 = \vect{z}_1}}{Q_{1}}\autoparent{\vect{\theta}} \prod_{j=2}^{k-1} \RND{P^{\autoparent{Q_j, \lambda_j}}_{\vect{\Theta}_j \mid \vect{Z}_j = \vect{z}_j}}{Q_{j}}\autoparent{\vect{\theta}}\\
\label{EqJan5qt13h51in2026inSophiaD}
&=& \frac{\exp\autoparent{ -\sum_{i=1}^{k-1} \frac{1}{\lambda_i} \EmpRisk{i}{\vect{z}_i}{\vect{\theta}}}}{\int \exp\autoparent{ -\sum_{i=1}^{k-1} \frac{1}{\lambda_i} \EmpRisk{i}{\vect{z}_i}{\vect{\nu}}} \mathrm{d}Q_{1}\autoparent{\vect{\nu}}},
\end{IEEEeqnarray}
where \eqref{EqJan5qt13h51in2026inSophiaC} follows from \eqref{EqDec27at10h57in2025inMadrid}; and  \eqref{EqJan5qt13h51in2026inSophiaD} follows from Definition~\ref{DefGibbsModels} and Lemma~\ref{LemmaGeneralNiceResult}. Substituting \eqref{EqJan5qt13h51in2026inSophiaD} in \eqref{EqJan5at11h59in2026inSophia} yields 
\begin{IEEEeqnarray}{rcl}
\nonumber
\label{EqJan5at14h31in2026inSophiaA}
 \RND{P^{\autoparent{Q_{k}, \lambda_{k}}}_{\vect{\Theta}_k | \vect{Z}_k = \vect{z}_{k}}}{Q_{1}}\autoparent{\vect{\theta}}
 &=&
 \frac{\exp\autoparent{ -\sum_{i=1}^{k-1}~\frac{1}{\lambda_i}~\EmpRisk{i}{\vect{z}_i}{\vect{\theta}}}}{\int \exp\autoparent{ -\sum_{i=1}^{k-1}~\frac{1}{\lambda_i}~\EmpRisk{i}{\vect{z}_i}{\vect{\nu}}} \mathrm{d}Q_{1}\autoparent{\vect{\nu}}} \\
 && \RND{P^{\autoparent{Q_1, \lambda_k}}_{\vect{\Theta}_k | \vect{Z}_k = \vect{z}_{k}}}{Q_1}\autoparent{\vect{\theta}} \exp \autoparent{C_{k}}\\
 \label{EqJan5at14h31in2026inSophiaB}
 \nonumber
 &=&  \frac{\exp\autoparent{ -\sum_{i=1}^{k}~\frac{1}{\lambda_i}~\EmpRisk{i}{\vect{z}_i}{\vect{\theta}}}}{\int \exp\autoparent{ -\sum_{i=1}^{k-1}~\frac{1}{\lambda_i}~\EmpRisk{i}{\vect{z}_i}{\vect{\nu}}} \mathrm{d}Q_{1}\autoparent{\vect{\nu}}}\\
&&
 \frac{ \int \exp\autoparent{-\sum_{i=1}^{k-1}~\frac{1}{\lambda_i}~\EmpRiskk{\vect{z}}{\vect{\nu}}{i}}\mathrm{d} Q_1(\vect{\nu})
}{\int \exp\autoparent{ -\sum_{j=1}^{k}~\frac{1}{\lambda_j}~\EmpRiskk{\vect{z}}{\vect{\nu}^\prime}{j}} \mathrm{d} Q_1(\vect{\nu}^\prime)}\squeezeequ\\
\label{EqJan5at14h31in2026inSophiaC}
&=&  \frac{\exp\autoparent{ -\sum_{i=1}^{k}~\frac{1}{\lambda_i}~\EmpRisk{i}{\vect{z}_i}{\vect{\theta}}}}{\int \exp\autoparent{ -\sum_{i=1}^{k}~\frac{1}{\lambda_i}~\EmpRisk{i}{\vect{z}_i}{\vect{\nu}^\prime}} \mathrm{d}Q_{1}\autoparent{\vect{\nu}^\prime}},\squeezeequ
\end{IEEEeqnarray}
where \eqref{EqJan5at14h31in2026inSophiaB} follows from \eqref{EqDec27at11h22in2025inMadrid}; and \eqref{EqJan5at14h31in2026inSophiaC} yields \eqref{EqDec30at11h25in2025Antibes}. This completes the proof. 

\section{Conclusions and Final Remarks}\label{Conclusion}
\enlargethispage{-0.10in}
This work establishes that, in a decentralized machine learning scenario, an appropriate choice of reference measures $(Q_1,Q_2,\ldots,Q_K)$ and regularization factors $(\lambda_1,\lambda_2,\ldots,\lambda_K)$ allows guaranteeing the same performance as a centralized system in which all training datasets are aggregated and jointly available.
The construction of such regularization factors is rather simple. The regularization factor of client~$k$ shall be the product of a strictly positive real (common to all clients) and the ratio of the sizes of the training dataset of client~$k$ and the aggregated training dataset.
The reference measure of client $k$, $k >1$, is the Gibbs measure (Gibbs algorithm) from which client~$k-1$ samples its models. The first client uses a given reference $Q_1$.
Under such a choice, client~$K$ obtains a Gibbs probability measure that solves the ERM-RER problem, with respect to the aggregated dataset, within a neighborhood of~$Q_1$. Via backward dissemination, all clients obtain the same probability measure (algorithm).
This choice of reference measures induces a nested structure whose construction requires transmitting $K-1$ probability measures with common support~\cite[Lemma~3]{perlaza2024empirical}.
Several practical challenges arise from this requirement.
A main limitation is finite-rate communication, possibly under delay constraints, which implies that the probability measure transmitted by client~$k$ may be received by client~$k+1$ with distortion.
Characterizing the impact of such distortions on the nested construction remains an open problem and is not addressed here.
A further limitation is the potentially large support of the involved Gibbs probability measures, which can make the communication requirement comparable to transmitting the training datasets.
Nonetheless, the transmission of a probability measure is more privacy-preserving than the actual transmission of training datasets.
%

\IEEEtriggeratref{12}
\bibliographystyle{\Latexfilepath/Bibliography/IEEEtranlink}
\bibliography{\Latexfilepath/Bibliography/Merged_BERMUDEZ_ref.bib}
 
\appendices
\end{document}